\newtheorem{thm}{Theorem}
\newtheorem{lem}[thm]{Lemma}
\numberwithin{equation}{section}
\newcommand{\Real}{\mathbb R}
\def\T{{\mathcal T}}
\def\U{{\mathcal U}}
\def\banditrank{\operatorname{BanditRank}}
\def\alg{\operatorname{Alg}}
\def\E{{\mathbb E}}
\def\pT{{\tilde p}}
\def\R{\mathbb R}
\def\PL{\mathcal{PL}_n}
\def\tourdist{\mathcal{BTL}_n}
\def\D{\mathcal{D}}
\def\OSMDrank{\operatorname{OSMDRank}}
\def\Projection{\operatorname{Projection}}
\def\Decomposition{\operatorname{Decomposition}}
\newcommand{\beats}[3]{[{#1},{#2}]_{#3}}
\newcommand{\myprec}[3]{#1 \prec_{#3} #2}
\newcommand{\myprecC}[4]{#1 \prec_{#4} #2 \prec_{#4}#3}
\newcommand{\myprecS}[4]{#1 \prec_{#4} \genfrac{}{}{0pt}{1}{#2}{#3}}
\newcommand{\myprecSS}[4]{\genfrac{}{}{0pt}{1}{#1}{#2} \prec_{#4} #3}
\newcommand{\cent}[1]{{\hat{#1}}}
\newcommand{\vectheta}{\mbox{\boldmath $\theta$}}
\newcommand{\vecp}{p}
\newcommand{\vecq}{q}
\newcommand{\vecr}{r}
\newcommand{\half}{\frac{1}{2}}
\newcommand{\tildeO}{\Tilde{O}}
\newcommand*{\myfrac}[2]{\genfrac{}{}{0pt}{}{#1}{#2}}
\title{Bandit Online Optimization Over the Permutahedron}
\author{
Nir Ailon
 \and Kohei Hatano \and Eiji
 Takimoto \\
}
\begin{document}

\maketitle

\begin{abstract}
The permutahedron is the convex polytope with vertex set consisting of the vectors $(\pi(1),\dots, \pi(n))$
for all permutations  (bijections) $\pi$ over $\{1,\dots, n\}$.
We study a bandit  game in which, at each step $t$, an adversary chooses a hidden weight weight vector $s_t$,
a player chooses a  vertex $\pi_t$ of the  permutahedron and suffers an observed instantaneous loss of $\sum_{i=1}^n\pi_t(i) s_t(i)$.

We study the problem in two regimes.  In the first regime, $s_t$ is a point in the polytope dual to the permutahedron.
Algorithm CombBand of Cesa-Bianchi et al (2009) guarantees a regret of $O(n\sqrt{T \log n})$ after $T$ steps.  Unfortunately, CombBand requires at each step an $n$-by-$n$ matrix permanent computation,
a $\#P$-hard problem.  Approximating the permanent is possible in the impractical running time of $O(n^{10})$, with an additional heavy inverse-polynomial dependence on the sought accuracy. 
We provide an algorithm of  slightly worse regret  $O(n^{3/2}\sqrt{T})$ but  with  more realistic time complexity $O(n^3)$ per step.
 The technical contribution is a bound on the variance of the Plackett-Luce noisy sorting process's `pseudo loss', obtained by establishing positive semi-definiteness of a family of 3-by-3 matrices of rational functions in exponents of $3$ parameters.

In the second regime, $s_t$ is in the hypercube.  For this case we present and analyze an algorithm based on Bubeck et al.'s (2012) OSMD approach with a novel  projection and decomposition technique for the permutahedron.
The algorithm is efficient and achieves a regret of $O(n\sqrt{T})$, but for a more restricted space of possible loss vectors.

\end{abstract}

\section{Introduction}

Consider a game in which, at each step, a player plays a permutation of some ground set $V=\{1,\dots, n\}$, and then
suffers (and observes) a loss.  We model the loss as a sum over the items of some latent quality of the item, weighted
by its position  in the permutation.  The game is repeated, and the items' quality can adversarially change
 over time.  
The game models many scenarios in which the player is an online system (say, a search/recommendation engine)
presenting a ranked list of items (results/products) to a stream of users.  
A user's experience 
is positive if she perceives the quality of the top items on the list as higher than those at the bottom.
The goal of the system is to create a total positive experience for its users.

There is a myriad of methods for modelling \emph{ranking} loss functions in the literature, especially
(but not exclusively) for information retrieval.
Our choice  allows us to study the problem in the framework of online combinatorial optimization in the \emph{bandit} setting, and to obtain highly nontrivial results improving on state of the art in either run time or regret bounds.  
More formally, we study online linear optimization over the
the \emph{$n$-permutahedron} action set, defined as the convex closure
of all vectors in $\R^n$ consisting of $n$ distinct coordinates taking values in $[n]:=\{1,\dots, n\}$ (permutations).
At each step $t=1,\dots, T$, the player outputs an action $\pi_t$ 
and suffers a loss $ \pi_t' s_t =  \sum_{i=1}^n \pi_t(i)s_t(i)\ ,$
where $s_t\in \R^n$ is the vector of ``item qualities'' chosen by some adversary who knows the player's
strategy but doesn't control their random coins. 
The performance of the player is the difference between their total loss and that of the optimal static player, who plays the best (in hindsight)
single permutation $\pi^*$  throughout.  This difference is known as \emph{regret}.
Note that, given $s_1,\dots, s_T$, $\pi^*$ can be computed by sorting the coordinates of $\sum_{t=1}^T s_t$ in decreasing order.
This is aligned with our practical requirement that items with higher quality should be placed first, and those with lower quality should
be last.

\section{Results, Techniques and Contribution}\label{sec:compare}

Our first of two results, stated as Theorem~\ref{thm:main},  is for the setting in which at each step the loss is uniformly bounded (by $1$ for simplicity) in absolute value for all
possible permutations.  Equivalently, the vectors $s_t$ belong to the polytope that is dual to the  permutahedron.  
Our algorithm, $\banditrank$, plays permutations from a distribution known as the Plackett-Luce model (see \cite{Marden95})
which is widely used in statistics and econometrics (see eg \cite{Beggs19811}). 
It uses an inverse covariance matrix of the distribution in order to obtain an unbiased loss vector estimator, which is a
standard technique \cite{DBLP:journals/jcss/Cesa-BianchiL12}.  The main technical difficulty (Lemma~\ref{lem:main}) is
in bounding second moment properties of Plackett-Luce, by establishing positive semidefiniteness of a certain family of $3$ by $3$ matrices.  The lemma is interesting in its own right as a tool for studying distributions over permutations.
The expected regret of our algorithm is $O(n^{3/2}\sqrt T)$ for $T$ steps, with running time of $O(n^3)$ per time step.
This  result should be compared to CombBand of \cite{DBLP:journals/jcss/Cesa-BianchiL12}, where
a framework for playing bandit games over combinatorially structured sets was developed.  Their techniques
extend that of \cite{DBLP:conf/nips/DaniHK07}.  
In each step, it draws a permutation
from a distribution that assigns to each permutation  $\pi$ a probability of  $e^{\eta \sum_{\tau=1}^t {\pi}' \tilde s_{\tau}}$,
where $\tilde s_t$ is a \emph{pseudo-loss} vector at time $t$, an unbiased estimator of the loss vector $s_t$.
Their algorithm guarantees a 
regret of $O(n\sqrt{T\log n})$, which is better than ours by a factor of $\Theta(\sqrt{n/\log n })$.  However,
its computational requirements are much worse.
In order to draw permutations, they need to compute nonnegative $n$ by $n$ matrix permanents.
Unfortunately, nonnegative permanent computation is $\#P$-hard, as shown by \cite{DBLP:journals/tcs/Valiant79}.
On the other hand, a groundbreaking result of \cite{DBLP:journals/jacm/JerrumSV04} presents a polynomial time approximation scheme
for permanent, which runs in time $O(n^{10})$ for fixed accuracy.
To make things worse, the dependence in the accuracy is inverse polynomial, implying that, even if we could perform arbitrarily accurate floating point
operations, the total running time would be \emph{super linear} in $T$, because a regret dependence of $\sqrt{T}$ over $T$
steps
requires accuracy inverse polynomial in $T$.  (Our algorithm does not suffer from this problem.)
From a practical point of view, the runtime dependence of CombBand in both $n$ and $T$ is infeasible for even modest cases.
For example, our algorithm can handle online ranking of $n=100$ items in an order of few millions of operations per game iteration.   In contrast, approximating the permanent of a $100$-by-$100$ positive matrix is utterly impractical.

We note that independently of our work, Hazan et al. \cite{DBLP:journals/corr/HazanKM13} have improved the state-of-the-art general purpose algorithm for linear bandit optimization, implying an algorithm with regret $O(n\sqrt{T})$ for our problem, but with worse
running time $\tilde O(n^4)$.\footnote{The running time is a product of $\tildeO(n^3)$ number of Markov chain steps required 
for drawing a random point from a convex set under a log-concave distribution, and $O(n\log n)$ time  to test
whether a point lies in the permutahedron.  By $\tilde O$ we hide poly-logarithmic factors.}

In our second result in Section~\ref{sec:OSMD} we further restrict $s_t$ to have $\ell_1$ norm of $1/n$.
(Note that this restriction is contained in $|\pi_t' s_t|\leq 1$ by H{\"o}lder).
We present and analyze an algorithm $\OSMDrank$ based on the bandit algorithm OSMD of
\cite{bubeck-etal:colt12} with  projection and decomposition techniques over
the permutahedron (\cite{yasutake-etal:isaac11,suehiro-etal:alt12}).
The projection is defined in terms of the binary relative entropy
divergence.
The restriction allows us to obtain an  expected regret bound of $O(n\sqrt{T})$
(a $\sqrt{\log n}$ improvement over CombBand).  
The running time is
$O(n^2 + n\tau(n))$, where $\tau(n)$ is the time complexity for some
numerical procedure, which is $O(n^2)$ in a fixed precision machine.


We note previous work on playing the permutahedron online optimization game in the \emph{full information case},
namely, when $s_t$ is known for each $t$. As far as we know, 
Helmbold et al. \cite{helmbold-warmuth:jmlr09}
were
the first to study a more general version of this problem, where the action set is the vertex set of the Birkhoff-von-Neumann
polytope (doubly-stochastic matrices). Suehiro et al. \cite{suehiro-etal:alt12} studied the problem by casting it as a submodularly constrained optimization
problem, giving near optimal regret bounds, and more recently Ailon \cite{ailon14} both provided optimal regret bounds with
improved running time and established tight regret lower bounds.

\section{Definitions and Problem Statement}\label{sec:def}
Let $V$ be a ground set of $n$ items. For simplicity, we identify $V$ with $[n] :=\{1,\dots n\}$.  
Let $S_n$ denote the set of $n!$ permutations over $V$, namely bijections over $[n]$.
By convention, we think of $\pi(v)$ for $v\in V$ as the \emph{position} of $v\in V$ in the ranking, 
where we think of lower numbered positions as \emph{more favorable}.
For distinct $u,v\in V$, we say that $u \prec_\pi v$ if $\pi(u) < \pi(v)$ (in words: $u$ \emph{beats} $v$).  We use $\beats{u}{v}{\pi}$ as shorthand
for the indicator function of the predicate $u \prec_\pi v$.


The convex closure of $S_n$ is known as the permutahedron polytope.  It will be more convenient for us to
consider a translated version of the permutahedron, centered around the origin.  More precisely,
for $\pi\in S_n$ we let $\hat \pi$ denote $$\cent{\pi} := (\pi(1)-(n+1)/2,\, \pi(2)-(n+1)/2,\dots, \pi(n)-(n+1)/2)\ .$$
It will be convenient to define a symmetrized version of the permutation set $\cent{S}_n := \{\cent{\pi}: \pi\in S_n\}$.  The symmetrized $n$-permutahedron, denoted $\cent{P}_n$ is the convex closure of $\cent{S}_n$.  Symmetrization allows us to work with a polytope that is centered around the origin.  Generalization our
result to  standard (un-symmetrized) permutations is a simple technicality that will be explained below.
The notation $u \prec_{\cent{\pi}} v$ and $\beats{u}{v}{\cent{\pi}}$ is defined as for $\pi\in S_n$ in an obvious manner.

At each step $t=1,\dots, T$, an adversary chooses and hides a nonnegative vector $s_t \in \R^n \equiv \R^V$, which assigns an elementwise quality measure $s_t(v)$ for
any $v\in V$.  The player-algorithm chooses a  permutation $\cent{\pi}_t\in \cent{S}_n$, possibly random, and suffers an instantaneous
loss 
\begin{equation}\label{eq:instloss}\ell_t := \cent{\pi}'_t s_t= \sum_{v\in V} \cent{\pi}_t(v)s_t(v)\ .\end{equation}
The total loss $L_t$ is defined as $\sum_{t=1}^T \ell_t$.  We will work with the notion of  \emph{regret},
defined as the difference $L_t-L^*_t$, where
$ L^*_T = \min_{\cent{\pi} \in \cent{S}_n} \sum_{t=1}^T \cent{\pi}' s_t.$
We let $\cent{\pi}^*$ denote any minimizer achieving $L^*_T$ in the RHS.

 For any $\cent{\pi}\in \cent{S_n}$ and $s \in \R^n$,
the dot-product $\cent{\pi}' s$ can be decomposed over pairs:
$\cent{\pi}' s = \frac 1 2\sum_{u\neq v} \beats{u}{v}{\pi}(s(v)-s(u))$.
This makes the symmetrized permutahedron easier to work with.  
Nevertheless, our results also apply to the non-symmetrized permutahedron as well, as we shall see below.

Throughout, the notation $\sum_{u\neq v}$ means summation over distinct, ordered pairs of elements $u,v\in V$, and $\sum_{u<v}$
means summation over distinct, unordered pairs.\footnote{We will only use expressions of the form $\sum_{u<v} f(u,v)$ for symmetric functions satisfying $f(u,v)=f(v,u)$.}
The uniform distribution over $\cent{S}_n$ will be denoted $\U_n$.

The smallest eigenvalue of a PSD matrix $A$ is denoted $\lambda_{\min}(A)$.  The norm $\|\cdot\|_2$ will denote spectral norm (Euclidean norm for a vector).  To avoid notation such as $C,C',C'',C_1$ for universal constants, the  expression $C$ will denote a ``general positive constant'' that may change its value as necessary.  For example, we may write $C=3C+5$.
\section{Algorithm $\banditrank$ and its Guarantee}\label{sec:main}
For this section, we will assume  that the instantaneous losses are uniformly bounded by $1$, in absolute value:
For all $t$ and $\cent{\pi}\in\cent{S}_n$, $|\cent{\pi}'s_t|\leq 1$.    Equivalently, using geometric language, the loss  vectors belong
to a polytope which is \emph{dual} to the permutahedron.

Now consider  Algorithm~\ref{alg:main}.
It maintains, at each time step $t$, a weight vector $w_t \in \R^n$.  At each time step,
it draws a random permutation $\cent{\pi}_t$ from a mixture $\D_t$  of the uniform distribution over $\cent{S_n}$ and 
a distribution $\PL(w)$ which we define shortly.
The distribution mixture is determined
by a parameter $\gamma$.   The algorithm then plays the permutation $\cent{\pi}_t$
and thereby suffers the instantaneous loss defined in (\ref{eq:instloss}).
The weights are consequently updated by adding an unbiased estimator $\tilde s_t$ of $s_t$ (computed using the pseudo-inverse covariance matrix
corresponding to $\D_t$), multiplied by another parameter $\eta>0$.

\paragraph{The Plackett-Luce Random Sorting Procedure:}
The distribution $\PL(w)$ over $\cent{S}_n$, parametrized by $w\in \R^n$, is defined by the following procedure.
  To choose the first (most preferred) item, the procedure draws a random item,
assigning probability proportional to $e^{w(u)}$ for each $u\in V$.  
It then removes this item from the pool of available items, and iteratively
continues to choose the second item, then third and so on. 
As claimed in the introduction, this random permutation model is well studied in statistics.
An important well known property of the distribution is that it can be equivalently defined
as a  \emph{Random Utility Model (RUM)} \cite{Marden95,Yellott77}:  To draw a permutation, add a random iid noise 
variable following the Gumbel distribution to each weight, and then sort the items of $V$ in decreasing value
of noisy-weights.\footnote{The Gumbel distribution, also known as doubly-exponential, has a cdf of $e^{-e^{-x}}$.}
The RUM characterization implies, in particular, that for any two \emph{disjoint} pairs of element $(u,v)$ and $(u',v')$, the events $u \prec_{\pi} v$
and $u' \prec_{\pi} v'$ are statistically independent if $\pi$ is drawn from $\PL(w)$, for any $w$.  This fact will be used later.


\noindent
We are finally ready to state our main result, bounding the expected regret of the algorithm.
\begin{thm}\label{thm:main}
If algorithm $\banditrank$ (Algorithm~\ref{alg:main}) is executed with parameters
$\gamma = O({{n^{3/2}}/\sqrt{T}})$ and $\eta = O(\gamma/n)$,
then the expected regret (with respect to the game defined by the symmetrized permutahedron) is at most
$ O(n^{3/2}\sqrt T)$.  The running time of each iteration is $O(n^3)$. Additionally, there exists
an algorithm with the same  expected regret bound and running time with respect to the standard permutahedron
(assuming the vectors $s_t$ uniformly satisfy $|\pi's_t|\leq 1, \forall \pi\in S_n$.)
\end{thm}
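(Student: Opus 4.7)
The plan is to follow the standard bandit-linear-optimization recipe for an algorithm combining explicit $\gamma$-exploration with a Plackett--Luce / FTPL update on the pseudo-loss estimators $\tilde s_t$. Write the expected regret as (i) the \emph{exploration cost}, at most $O(\gamma T)$ since $|\cent\pi's_t|\leq 1$ and the $\gamma$-mixture with $\U_n$ costs at most a constant per step, plus (ii) the \emph{prediction cost} of the Plackett--Luce update on the sequence $\tilde s_1,\dots,\tilde s_T$. A standard FTPL analysis with Gumbel perturbations on $\R^n$ yields a prediction-cost bound of the form $O(n\log n/\eta) + \eta \sum_{t=1}^T \E[\tilde s_t' H_t \tilde s_t]$, where $H_t$ is the local second-moment matrix (the ``stability Hessian'' of the FTPL potential) of the Plackett--Luce family at the running parameter $w_t$.

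Build the estimator as $\tilde s_t := \ell_t\cdot M_t^{+}\cent\pi_t$, with $M_t := \E_{\cent\pi\sim\D_t}[\cent\pi\cent\pi']$ and $M_t^{+}$ its Moore--Penrose pseudo-inverse. The $\gamma$-mixture with $\U_n$ makes $M_t$ invertible on $\onevec^\perp$, so $\tilde s_t$ is unbiased for $s_t$ modulo an irrelevant $\onevec$-shift (which is immaterial since $\cent{S}_n \subseteq \onevec^\perp$). Using $|\ell_t|\leq 1$ and $M_t^{+} M_t M_t^{+} = M_t^{+}$, a direct computation yields
\begin{equation*}
\E\bigl[\tilde s_t' M_t \tilde s_t\bigr] \;=\; \E\bigl[\ell_t^2\, \cent\pi_t' M_t^{+}\cent\pi_t\bigr] \;\leq\; \tr(M_t^{+}M_t) \;=\; \rank(M_t) \;\leq\; n-1.
\end{equation*}
Everything therefore reduces to establishing a PSD comparison $H_t \preceq C\cdot M_t$ on $\onevec^\perp$ for a universal constant $C$.

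This PSD comparison is the technical heart of the proof, and will be the main obstacle. It is exactly the content of Lemma~\ref{lem:main}. Expanding $\cent\pi\cent\pi'$ pair-by-pair in the comparison indicators $\beats{u}{v}{\pi}$ and using the Gumbel/RUM characterization of Plackett--Luce, which makes comparisons on \emph{disjoint} pairs statistically independent, every entry of $H_t$ and of the Plackett--Luce block of $M_t$ depends only on the joint distribution of comparisons within each triple $\{u,v,z\}\subseteq V$. The $n\times n$ PSD comparison thereby localizes to a parametrized family of $3\times 3$ matrices whose entries are rational functions in $e^{w(u)},e^{w(v)},e^{w(z)}$, and Lemma~\ref{lem:main} establishes that this family is PSD. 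Coupled with the trivial fact $M_t \succeq (1-\gamma)\,M_{\PL(w_t)}$, this yields $H_t \preceq 2C\,M_t$ (for $\gamma\leq 1/2$), and hence $\E[\tilde s_t' H_t \tilde s_t] = O(n)$ per time step.

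Assembling the estimates, the expected regret is at most $O(\gamma T + n\log n/\eta + \eta n T)$. Choosing $\eta = \Theta(\gamma/n)$ and $\gamma = \Theta(n^{3/2}/\sqrt T)$ drives the dominant terms to $O(n^{3/2}\sqrt T)$ (the bias term $n\log n/\eta$ is in fact asymptotically smaller), yielding the claimed regret bound. The $O(n^3)$ per-iteration cost is dominated by assembling $M_t$ from its $O(n^2)$ pairwise Plackett--Luce marginals and computing its pseudo-inverse; a single Plackett--Luce sample requires only $O(n\log n)$. Finally, the un-symmetrized permutahedron variant is immediate: writing $\pi = \cent\pi + \tfrac{n+1}{2}\onevec$, we have $\pi's_t = \cent\pi's_t + \tfrac{n+1}{2}\onevec's_t$, and the extra additive term is constant in $\pi$ and cancels from any regret comparison, while a constant rescaling of $s_t$ absorbs the normalization change needed to preserve $|\pi's_t|\leq 1$.
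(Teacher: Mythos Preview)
Your high-level decomposition (exploration cost plus prediction cost, unbiased pseudo-loss via $P_t^+$, variance control giving $\E[\tilde s_t'P_t\tilde s_t]\leq n$) matches the paper's. But the step you call ``a standard FTPL analysis with Gumbel perturbations'' is not standard here, and the role you assign to Lemma~\ref{lem:main} is not what the lemma does. Plackett--Luce is \emph{not} the exponential-weights distribution $p(\pi)\propto e^{\eta\,\cent{\pi}'w}$ on permutations, so there is no off-the-shelf mirror-descent bound of the form $O(n\log n/\eta)+\eta\sum_t \tilde s_t'H_t\tilde s_t$ for it; and if your $H_t$ really is the Hessian of the Gumbel-FTPL potential, namely $H_t=\operatorname{Cov}_{\PL(w_{t-1})}(\cent{\pi})$, then $H_t\preceq \E_{\PL}[\cent{\pi}\cent{\pi}']\preceq(1-\gamma)^{-1}P_t$ is immediate and would not need Lemma~\ref{lem:main} at all (nor does the covariance localize to triples, because of the rank-one term $\mu\mu'$). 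What the paper actually does is different: it takes as potential $\sum_{u<v}\log W_t(u,v)$ with $W_t(u,v)=e^{\frac12(w_t(u)-w_t(v))}+e^{\frac12(w_t(v)-w_t(u))}$, which is the log-partition function not of $\PL$ but of the \emph{tournament} distribution $\tourdist(w_t)$ (independent Bradley--Terry--Luce coin flips on each unordered pair). Expanding that potential produces the second-order term $\E_{A\sim\tourdist}\bigl[(\sum_{\myprec{u}{v}{A}}(\tilde s_t(u)-\tilde s_t(v)))^2\bigr]$, and Lemma~\ref{lem:main} is exactly the inequality $\E_{\tourdist}[X_2^2]\leq\E_{\PL}[X_1^2]$ that lets one replace it by the Plackett--Luce second moment, after which the CombBand variance argument applies.

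Two further symptoms confirm that your regret expression cannot be the right one. First, the bias term the paper obtains from the pairwise potential is $\binom{n}{2}\log 2/\eta=\Theta(n^2/\eta)$, not $O(n\log n/\eta)$; and the constraint $\eta=O(\gamma/n)$ is not a free tuning choice but is \emph{forced} by the requirement $\eta\,\bigl|\sum_{(u,v)\in A}(\tilde s_t(u)-\tilde s_t(v))\bigr|\leq 1$ needed for $e^x\leq 1+x+x^2$, via the bound $\|\tilde s_t\|_2\leq C/(\sqrt n\,\gamma)$ coming from $\lambda_{\min}(P_t)\geq C\gamma n^2$. Your bound, were it valid without that constraint, would optimize at $\eta=\Theta(\sqrt{\log n/T})$ and give regret $O(n\sqrt{T\log n})$, strictly better than the theorem claims; your unmotivated choice $\eta=\Theta(\gamma/n)$ only hides this inconsistency. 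Second, in the un-symmetrized case the bandit observation is $\pi_t's_t=\cent{\pi}_t's_t+\tfrac{n+1}{2}\mathbf{1}'s_t$ with $\mathbf{1}'s_t$ \emph{unknown} to the player, so the offset does not simply cancel at the estimator level; the paper handles it by embedding actions into $\cent{S}_n\times\{1\}$ and re-deriving the eigenvalue lower bound for the augmented second-moment matrix.
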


\begin{algorithm}[t!]
\caption{Algorithm $\banditrank(n, \eta, \gamma, T)$\label{alg:main} (assuming $|\cent{\pi}'s_t|\leq 1$ for all $t$ and $\cent{\pi}\in\cent{S}_n$)}
\begin{algorithmic}[1]
        \STATE given: ground set size $n$, positive parameters $\eta, \gamma$ ($\gamma \leq 1$), time horizon $T$
	\STATE set $w_0(u) = 0$ for all $u\in V=[n]$
	\FOR {$t=1..T$}
          \STATE let distribution $\D_t$ over $\cent{S}_n$ denote a mixture of $\U_n$ (with probability $\gamma$)
                          and $\PL(w_{t-1})$ (with probability $1-\gamma$)
          \STATE draw and output $\cent{\pi}_t \sim \D_t$
		\STATE observe and suffer loss $\ell_t$ $(=\cent{\pi}'_t s_t)$  
           \STATE $\tilde s_t =\ell_t  P_t^+ \cent{\pi}_t$  where $P_t=\E_{\cent{\sigma} \sim \D_t}[\cent{\sigma} \cent{\sigma}']$ \label{line:pseudoinverse}
		\STATE set $w_t = w_{t-1} + \eta\tilde s$
	\ENDFOR
\end{algorithmic}
\end{algorithm}

The proof uses a standard technique used e.g. in Cesa-Bianchi et al.'s CombBand \cite{DBLP:journals/jcss/Cesa-BianchiL12}, which is itself
an adaptation of Auer et al.'s Exp3 \cite{Auer:2003:NMB:589343.589365} from the finite case to the 
structured combinatorial case.  The distribution from which the actions  $\cent{\pi}_t$ are drawn in the algorithm differ
from the distribution used in CombBand, and give rise to the technical difficulty of variance estimation, resolved in Lemma~\ref{lem:main}.

\begin{proof}
Let $\T_n$ denote the set of \emph{tournaments} over $[n]$.  More precisely, an element $A\in \T_n$ is
a subset of $[n]\times [n]$ with either $(u,v)\in A$ or $(v,u)\in A$ (but not both) for all $u<v$.  We extend our previous notation
so that $\myprec{u}{v}{A}$ is equivalent to the predicate $(u,v)\in A$.

For any pair $\cent{\pi}\in \cent{S}_n$ and $w\in \R^n$,  $p(\cent{\pi}|w)$ denotes the probability assigned to $\cent{\pi}\in\cent{S}_n$ by
$\PL(w)$.
Slightly abusing notation, we define the following shorthand:
\begin{align}
p(\myprec{u}{v}{}|w) &:= \sum_{\cent{\pi}: \myprec{u}{v}{\cent{\pi}}} p(\pi|w) = \frac{e^{w(u)}}{e^{w(u)}+e^{w(v)}}  \nonumber \\ 
p(\myprecC{u}{v}{z}{}|w) &:= \sum_{\cent{\pi}: \myprecC{u}{v}{z}{\cent{\pi}}} p(\cent{\pi}|w) = \frac{e^{w(u)+w(v)}}{(e^{w(u)}+e^{w(v)}+e^{w(z)})(e^{w(v)}+e^{w(z)})}\ .  \nonumber 
\end{align}

The last two right hand sides 
are easily derived from the definition of the distribution $\PL(w)$,
see also e.g. \cite{Marden95}.  We also define the following abbreviations:
\begin{align}
p(\myprecS{u}{v}{z}{}|w) &:= p(\myprecC{u}{v}{z}{}|w) + p(\myprecC{u}{z}{v}{}|w) 
=  \frac{e^{w(u)}}{e^{w(u)}+e^{w(v)}+e^{w(z)}}  \label{r3} \\
p(\myprecSS{u}{v}{z}{}|w) &:= p(\myprecC{u}{v}{z}{}|w) + p(\myprecC{v}{u}{z}{}|w)   \nonumber \\
&\hspace{0.5cm}= \frac{e^{w(u)+w(v)}}{e^{w(u)}+e^{w(v)}+e^{w(z)}}\left(\frac 1{e^{w(v)}+e^{w(z)}} + \frac 1{e^{w(u)}+e^{w(z)}}\right)  \label{r4} 
\end{align}

We will also need to define a distribution over the set of  tournaments $\T_n$.  The distribution, $\tourdist(w)$ is parametrized by a weight
vector $w\in \R^n$.  Drawing $A\sim \tourdist(w)$ is done by independently setting, for all $u<v$ in $V$,
\begin{align}
(u,v)\in A &\mbox{ with probability } p(\myprec{u}{v}{}|w)=\frac{e^{w(u)}}{e^{w(u)}+e^{w(v)}} \nonumber \\
(v,u)\in A &\mbox{ with probability } p(\myprec{v}{u}{}|w)=\frac{e^{w(v)}}{e^{w(u)}+e^{w(v)}}\ .\nonumber
\end{align}
(Note that the distribution is equivalently defined as the product distribution, over all $u<v$ in $V$, of the Bradley-Terry-Luce 
pairwise preference model, hence the name $\tourdist$.  We refer to \cite{Marden95} for definition and history  of the Bradley-Terry-Luce model.)

For $A\in \T_n$, we denote by $\pT(A|w)$ the probability  $\prod_{\myprec{u}{v}{A}} p(\myprec{u}{v}{}|w)$ of drawing $A$
from $\tourdist(w)$.
The proof of the theorem proceeds roughly as the main result upper bounding the expected regret of CombBand in \cite{DBLP:journals/jcss/Cesa-BianchiL12}.  
The following technical lemma
is required in anticipation of a major hurdle (inequality (\ref{gggg}).
We believe the inequality is interesting in its own right as a probabilistic statement on permutation and tournament distributions.

\begin{lem}\label{lem:main}
Let $s,w \in \R^n$. 
Let  $\cent{\pi} \sim \PL(w)$ and $A\sim \tourdist(w)$ be drawn independently.  Define
$X_1= \sum_{u,v:\ \myprec{u}{v}{\cent{\pi}}}(s(v)-s(u)) =\cent{\pi}'s,\ X_2 = \sum_{u, v:\  \myprec{u}{v}{A}}(s(v)-s(u))$.
Then $\E[X_2^2] \leq \E[X_1^2]$.
\end{lem}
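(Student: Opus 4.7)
The plan is to decompose $\E[X_1^2]-\E[X_2^2]$ into a sum indexed by triples of elements of $V$, and to prove non-negativity of each triple's contribution by verifying positive semidefiniteness of an associated $3\times 3$ matrix of rational functions in $a=e^{w(u)},b=e^{w(v)},c=e^{w(z)}$.

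First I would rewrite $X_i=\sum_{u<v}d_{uv}\sigma_{uv}^{(i)}$, where $d_{uv}=s(v)-s(u)$ and $\sigma_{uv}^{(i)}\in\{-1,+1\}$ encodes the orientation of the pair $\{u,v\}$ under $\PL(w)$ ($i=1$) or $\tourdist(w)$ ($i=2$). A crucial observation is that the marginals $\E[\sigma_{uv}^{(i)}]$ agree for $i=1$ and $i=2$, both equal to $\tfrac{e^{w(u)}-e^{w(v)}}{e^{w(u)}+e^{w(v)}}$. Expanding both second moments and subtracting, three kinds of cross terms arise: identical-edge terms cancel (both give $d_{uv}^2$); disjoint-edge terms cancel, because $\sigma_{uv}$ and $\sigma_{u'v'}$ are independent under $\tourdist$ by construction and under $\PL$ by the statistical-independence property stated just after the definition of $\PL(w)$; and only pairs of edges sharing exactly one vertex contribute. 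Each such pair lies in a unique triple $T=\{u,v,z\}\subset V$, yielding
\[
\E[X_1^2]-\E[X_2^2]=\sum_{T\subset V,\,|T|=3}Q_T(s),\qquad Q_T(s)=\sum_{e_1\ne e_2\in\binom{T}{2}}d_{e_1}d_{e_2}\,\mathrm{Cov}^{\PL}[\sigma_{e_1},\sigma_{e_2}].
\]

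Next, I would compute $Q_T$ explicitly for a fixed triple $T=\{u,v,z\}$, using $p(\myprec{u}{v}{}|w)$ together with the joint formulas (\ref{r3}) and (\ref{r4}) and their obvious reflections across the triple, which between them determine every three-element event. This yields closed-form rational expressions in $a,b,c$ for the three relevant covariances and, after collecting the $s(u),s(v),s(z)$ terms, a symmetric $3\times 3$ matrix $M_T=M_T(a,b,c)$ with $Q_T(s)=(s(u),s(v),s(z))\,M_T\,(s(u),s(v),s(z))^\top$; this is the ``family of $3\times 3$ matrices of rational functions in exponents of $3$ parameters'' referred to in the introduction. Since $Q_T$ depends on $s$ only through differences, the all-ones vector lies in $\ker M_T$, so $\rank M_T\le 2$, and positive semidefiniteness of $M_T$ reduces to verifying (i) non-negativity of one diagonal entry and (ii) non-negativity of one $2\times 2$ principal minor. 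After clearing the common denominator, each becomes a polynomial inequality in the positive variables $a,b,c$.

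The main obstacle is step (ii). The relevant $2\times 2$ subdeterminant is a symmetric polynomial in $a,b,c$ of substantial degree, and the inequality is not visible by inspection. My first attack would be to search for a sum-of-squares certificate of the form $D(a,b,c)\cdot\det_{2\times 2}(M_T)=\sum_i m_i(a,b,c)\,q_i(a,b,c)^2$, where $D>0$ is the common denominator and the $m_i$ are monomials in $a,b,c$ (hence non-negative). Failing that, I would look for a probabilistic certificate: by construction $Q_T$ is the excess variance of the pairwise-edge representation of the PL ranking on $T$ over the corresponding BTL independent-edge representation, so one expects a factorization $M_T=N_T^\top N_T$ arising from writing $Q_T$ as the variance of some natural linear statistic of the PL ranking restricted to $T$ (for example, a centered linear function of the positions of $u,v,z$). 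Either certificate closes the lemma by summing the non-negative contributions over all $\binom{n}{3}$ triples.
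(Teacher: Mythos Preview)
Your proposal is correct and follows essentially the same route as the paper: expand both second moments, cancel the identical-pair and disjoint-pair terms (the latter via the RUM independence property of $\PL(w)$), group what remains by triples, write each triple's contribution as a $3\times 3$ quadratic form with $(1,1,1)$ in its kernel, and reduce PSD to one diagonal entry and one $2\times 2$ principal minor. The only divergence is that you overestimate the final step: after simplification both quantities collapse to single ratios of manifestly positive terms (for instance $H_{aa}=4abc/\bigl((a+b)(a+c)(a+b+c)\bigr)$ and the $2\times 2$ minor equals $16a^2b^2c^2/\bigl((a+b)^2(a+c)(b+c)(a+b+c)^2\bigr)$ in your notation $a=e^{w(u)},b=e^{w(v)},c=e^{w(z)}$), so no SOS search or probabilistic certificate is needed.
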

(Note that clearly, $\E[X_2]=\E[X_1]$, so the lemma in fact upper bounds the variance of $X_2$ by that of $X_1$.)
The proof of the lemma is deferred to Section~\ref{sec:prooflem}.

Continuing the proof of Theorem~\ref{thm:main}, we let $q(\pi|w)$ denote the probability of drawing $\pi$ from  the mixture of the uniform 
distribution (with probability $\gamma$) and $\PL(w)$  (with probability $(1-\gamma)$.
  Similarly to above, $q(\myprec{u}{v}{}|w)$ denotes $\sum_{\cent{\pi}: \myprec{u}{v}{\cent{\pi}}} q(\cent{\pi}|w)$.
By these definitions,
\begin{equation} 
q(\cent{\pi}|w) = (1-\gamma) p(\cent{\pi}|w) + \frac \gamma{n!}\ \ \ \ \ \ 
q(\myprec{u}{v}{}|w) = (1-\gamma) p(\myprec{u}{v}{}|w) + \frac \gamma 2 \ . \label{weryreg}\end{equation}

\noindent
The analysis proceeds by defining a potential function:
$W_t(u,v) :=  e^{\frac 1 2\eta(w_{t}(u) - w_{t}(v))} +  e^{\frac 1 2\eta(w_t(v) - w_t(u))}$.
  The quanatity of interest will be $\E\left[\sum_{u<v}\sum_t \log \frac{W_t(u,v)}{W_{t-1}(u,v)}\right]$, where the expectation is taken over all random coins used by the algorithm throughout $T$ steps.  This quantity will be bounded from above and from below, giving rise to a bound on the expected  total loss, expressed using the optimal static loss.
On the one hand,
\begin{align}
\sum_{u<v}& \log \frac{W_t(u,v)}{W_{t-1}(u,v)} = \sum_{u<v} \log \left ( \frac{e^{\frac 1 2(w_t(u)-w_t(v))}}{W_{t-1}(u,v)} + \frac{e^{\frac 1 2(w_t(v)-w_t(u))}}{W_{t-1}(u,v)} \right ) \nonumber \\
&= \sum_{u<v} \log \left ( \frac{e^{\frac 1 2(w_{t-1}(u)-w_{t-1}(v))}e^{\frac 1 2 \eta(\tilde s_t(u)-\tilde s_t(v))}}{W_{t-1}(u,v)} + \frac{e^{\frac 1 2(w_t(v)-w_t(u))} e^{\frac 1 2\eta  (\tilde s_t(v)-\tilde s_t(u))} }{W_{t-1}(u,v)} \right ) \nonumber \\
&= \sum_{u<v} \log \left ( p(\myprec{u}{v}{}|w_{t-1})e^{\frac 1 2 \eta(\tilde s_t(u)-\tilde s_t(v))} + p(\myprec{v}{u}{}|w_{t-1})e^{\frac 1 2\eta  (\tilde s_t(v)-\tilde s_t(u))}\right )\nonumber \\
&= \log \left( \sum_{A \in \T_n}\pT(A|w_{t-1})e^{\frac  1 2 \eta\sum_{\myprec{u}{v}{A}}(\tilde s_t(u)- \tilde s_t(v))} \right )\ . \nonumber 
\end{align}
We will now assume that $\eta$ is small enough so that for all $A\in \T_n$ and for all $t$,
\begin{equation}\label{eq:etaineq}
\eta \left |  \sum_{(u,v)\in A}(\tilde s_t(u)- \tilde s_t(v))\right | \leq 1\ .
\end{equation}
\noindent
(This 
will be shortly enforced.)  Using $e^x \leq 1+x+x^2$  $\forall x\in[-1/2,1/2]$,
\begin{align}
\sum_{u,v}& \log \frac{W_t(u,v)}{W_{t-1}(u,v)} \leq \log \left[ \sum_{A \in\T_n}\pT(A|w_{t-1})
\left (1+\frac  \eta 2 \sum_{\myprec{u}{v}{A}}(\tilde s_t(u)- \tilde s_t(v)) \right . \right . \nonumber \\
&\hspace{6cm}\left . \left .+\frac  {\eta^2} 4 \left (\sum_{\myprec{u}{v}{A}}(\tilde s_t(u)- \tilde s_t(v))\right )^2\right ) \right ]\nonumber  \\\
&= 
 \log \left[ 1 + \frac  \eta 2 \E_{A\sim\tourdist(w_{t-1})}\left [\sum_{\myprec{u}{v}{A}}(\tilde s_t(u)- \tilde s_t(v))
+ \frac  {\eta^2} 4\left(\sum_{\myprec{u}{v}{A}}(\tilde s_t(u)- \tilde s_t(v))\right)^2\right ]\right] \nonumber \\
&\leq \log \left[ 1 + \frac  \eta 2 \E_{\cent{\pi}\sim\PL(w_{t-1})}\left [\sum_{\myprec{u}{v}{\cent{\pi}}}(\tilde s_t(u)- \tilde s_t(v))
+ \frac  {\eta^2} 4\left(\sum_{\myprec{u}{v}{\cent{\pi}}}(\tilde s_t(u)- \tilde s_t(v))\right)^2\right ]\right]\ . \label{gggg}
\end{align}
where we used Lemma~\ref{lem:main} in the last inequality (together with the fact that  the marginal
probability of the event ``$\myprec{u}{v}{Y}$'' is identical for both $Y\sim \PL(w_{t-1})$ and $Y\sim \tourdist(w_{t-1})$).
Henceforth, for any $\cent{\pi}\in \cent{S}$, we let 
$ \tilde \ell_t(\cent{\pi}) := \cent{\pi}' \tilde s_t = \sum_{\myprec{u}{v}{\cent{\pi}}} (\tilde s(v)-\tilde s(u))$.
Using \ref{weryreg} and the fact that $\log(1+x)\leq x$ for all $x$,  we get
\begin{align}
\sum_{u <v}& \log \frac{W_t(u,v)}{W_{t-1}(u,v)} \nonumber \\
&\leq  \frac \eta 2 \sum_{u\neq v} \frac{q(u\prec v|w_{t-1})-\frac \gamma 2}{1-\gamma}(\tilde s_t(u)-\tilde s_t(v))
+ \frac {\eta^2} 4  \sum_{\cent{\pi} \in \cent{S}_n}\frac{q(\pi|w_{t-1}) - \frac \gamma{n!}}{1-\gamma}{\tilde \ell_t(\cent{\pi})}^2 \nonumber\\
&\leq    \frac {-\eta} {2(1-\gamma)}\sum_{\cent{\pi}\in \cent{S}_n}q_t(\cent{\pi}|w_{t-1}) \tilde \ell_t(\cent{\pi}) +  \frac {\eta^2} {{4(1-\gamma)}} \sum_{\cent{\pi} \in \cent{S}_n}{q_t(\cent{\pi}|w_{t-1})} \tilde \ell_t(\cent{\pi})^2 \ .\nonumber
\end{align}

We now note that 
(1)
$\sum_{\cent{\pi}\in\cent{S}}q_t(\cent{\pi}|w_{t-1})\tilde \ell_t = \ell_t$ (following the properties of matrix pseudo-inverse in Line~\ref{line:pseudoinverse} in Algorithm~\ref{alg:main}), and 
(2) 
 $\sum_{\cent{\pi} \in \cent{S}_n}{q_t(\cent{\pi}|w_{t-1})} \tilde \ell_t(\pi)^2] \leq n$ (see top of page 31 together with Lemma~{15}  in \cite{DBLP:journals/jcss/Cesa-BianchiL12}).
 Applying these inequalities, and then taking expectations over the algorithm's randomness and summing for
$t=1,\dots, T$, we get
\begin{align*}
\sum_{t=1}^T\E\left [\sum_{u,v}\log \frac{W_t(u,v)}{W_{t-1}(u,v)}\right] 
\leq   - \frac \eta {2(1-\gamma)}\E[L_T] +  \frac {\eta^2} {8(1-\gamma)}  n  T \ .\\
\end{align*}



\noindent
On the other hand, 
\begin{align*}
\sum_{t=1}^T&\E\left [\sum_{u,v} \log \frac{W_t(u,v)}{W_{t-1}(u,v)}\right ] \\
& \geq \sum_{u,v} \E\left[\log \left(\beats{u}{v}{\pi^*}e^{\frac 1 2 (w_T(u)-w_T(v))} + \beats{v}{u}{\pi^*}e^{\frac 1 2 (w_T(u)-w_T(v))})\right )\right ] 
  - \sum_{u,v}  \log 2 \\
&= \frac 1 2\sum_{u,v}   \left(\E\left [\beats{u}{v}{\pi^*}(w_T(u)-w_T(v))  +  \beats{v}{u}{\pi^*}(w_T(u)-w_T(v))\right]\right) - {n \choose  2}  \log 2 \\
&=\frac \eta 2 \sum_{u,v} \left (\E\left [\beats{u}{v}{\pi^*}\sum_t(\tilde s_t(u)-\tilde s_t(v))  +  \beats{v}{u}{\pi^*}\sum_t(\tilde s_t(u)-\tilde s_t(v))\right ]\right) 
- {n \choose  2}  \log 2 \\
&= \frac \eta 2 \sum_{u,v} \left(\beats{u}{v}{\pi^*}\sum_t( s_t(u)- s_t(v))  +  \beats{v}{u}{\pi^*}\sum_t(s_t(u)- s_t(v))\right) 
- {n \choose  2}  \log 2 \\
&=- \frac \eta 2 L_T^*
- {n \choose  2}  \log 2 \ ,
\end{align*}

\noindent
where $L_T^*$ is the total loss of a player who chooses the best permutatation $\cent{\pi}^*\in \cent{S}_n$ in hindsight.
Combining, we obtain
$\frac {\eta}{2(1-\gamma)}\E[ L_t] \leq \frac \eta 2 L_T^* + \frac{n^2}{2}  \log 2  +  \frac {\eta^2} {4(1-\gamma)}  n  T$.
Multiplying both sides by $2(1-\gamma)/\eta$ yields
\begin{eqnarray}\label{almost}
\E [L_T] \leq  L_T^* + \gamma|L_T^*| + \frac{n^2\log 2}{\eta} + \frac{\eta} 2 nT\ .
\end{eqnarray}

\noindent
We shall now work to impose (\ref{eq:etaineq}).
\begin{eqnarray*}\label{condition}
\max_t \max_{A\in \T(V)} \left |\sum_{(u,v)\in A} (\tilde s_t(u) - \tilde s_t(v)) \right|  
\leq \max_t \sqrt{\sum_{v\in V} \tilde s_t(v)^2}\sqrt{\sum_{i=-(n-1)/2}^{(n-1)/2} i^2}  
\leq  C\max_t \|\tilde s_t\|_2 n^{3/2} \ ,
\end{eqnarray*}
where the left inequality is Cauchy-Schwartz.  We now note that  $\|\tilde s_t\|_2 \leq |\ell_t| \|P^+_t\|_2 \|\cent{\pi}_t\|_2$.  Clearly $\|\cent{\pi}\|_2$ is bounded above by $Cn^{3/2}$.  Also $\|P^+_t\|_2$ equals
$1/\lambda_{\min}(P_t)$.  By Weyl's inequality $\lambda_{\min}(P_t) \geq \gamma \lambda_{\min}(\E_{\cent{\tau}\sim \U_n}[\cent{\tau}\cent{\tau}'])$.
It is an exercise to check that $\lambda_{\min}(\E_{\cent{\tau}\sim \U_n}[\cent{\tau}\cent{\tau}']) \geq Cn^2$.
We conclude (also recalling that $|\ell_t|\leq 1$) that $\max_t \|\tilde s_t\|_2 \leq C/(n^{1/2}\gamma)$.
Combining, we shall satisfy (\ref{condition}) by imposing $\eta \leq \gamma/(Cn)$.  Plugging  in (\ref{almost}), we get
\begin{eqnarray}\label{almost2}
\E [L_T(\alg)] \leq  L_T^*  + \gamma|L_T^*| +   \frac{C  n^3}{\gamma} + {{C\gamma T} }\ .
\end{eqnarray}
\noindent
Choosing $\gamma = \sqrt{\frac{C n^3}{T}}$ gives
$
\E [L_T(\alg)] \leq  L_T^* +  {\frac{ C n^{3/2} }{\sqrt{T}}}|L_T^*| +   n^{3/2}\sqrt{T}
$.

This concludes the required result for the symmetrized case, because $|L_T^*| \leq T$.
For the standard permutahedron, we notice that for any $\pi\in S_n$ and its symmetrized counterpart $\cent{\pi}\in\cent{S_n}$, and any vector $s\in \R^n$, $\pi's - \cent{\pi}'s = \frac{n-1}2\sum_{v\in V} s(v) =: f(s)$.  Equivalently, we can write
$\pi's = (\cent{\pi}', 1)(s; f(s))$, where $(\cdot,a)$ appends the scalar $a$ to the right of a row vector and $(\cdot;a)$
appends  to the bottom of a column vector.   Algorithm~\ref{alg:main} can be easily adjusted to work
with action set $\cent{S}_n\times\{1\}$.  For the proof, we keep the same potential function.
 The  technical
part of the proof is lower bounding the smallest eigenvalue of the expectation of $\cent{\tau}\cent{\tau}'$, where
$\cent{\tau}$ is now drawn from the uniform distribution on $\cent{S}_n\times\{1\}$.
We omit these simple details for lack of space.
\qed
\end{proof}

\subsection{Proof of Lemma~\ref{lem:main}}\label{sec:prooflem}
The expression $\E[X_1^2]$ can be written as
\begin{align}
 \E[X_1^2] &= \sum_{u\neq v} p(u\prec v|w) ((s(v)-s(u))^2  \nonumber \\
&\hspace{0.1cm} +\sum_{ |\{u,v,u',v'\}|=4} p(u\prec v\wedge u'\prec v'|w)\,(s(v)-s(u))(s(v')-s(u')) \nonumber \\
&\hspace{0.1cm} +\sum_{\myfrac{u\neq v,u'\neq v'}{|\{u,v,u',v'\}|=3}} p(u\prec v\wedge u'\prec v'|w)\,(s(v)-s(u))(s(v')-s(u'))   \label{X1}\ ,
\end{align}
where $p(u\prec v\wedge u'\prec v'|w)$ is the probability that both ${u}\prec_{{\cent{\pi}}}{v}$ and  ${u'}\prec_{{\cent{\pi}}}{v'}$ with $\cent{\pi}\sim \PL(w)$.  Similarly,
\begin{align}
 \E[X_2^2] &= \sum_{u\neq v} p(u,v|w) ((s(v)-s(u))^2  \nonumber \\
&\hspace{0cm} +\sum_{ |\{u,v,u',v'\}|=4} p(u\prec v|w)p(u'\prec v'|w)\,(s(v)-s(u))(s(v')-s(u')) \nonumber \\
&\hspace{0cm} +\sum_{\myfrac{u\neq v,u'\neq v'}{|\{u,v,u',v'\}|=3}} p(u\prec v|w)p(u'\prec v'|w)\,(s(v)-s(u))(s(v')-s(u')) \label{X2} \ .
\end{align}

Since  Plackett-Luce is a random utility model (see \cite{Marden95}), it is clear that whenever a pair of pairs $u\neq v,u'\neq v'$ satisfies  $|
\{u,v,u',v'\}|=4$,
 $p(u\prec v\wedge u'\prec v'|w) = p(u\prec  v|w)p(u'\prec v'|w)$.
Hence, it suffices to prove that the third summand in the RHS of (\ref{X2}) is upper bounded by the third summand in the RHS of (\ref{X1}).
But now notice the  following identity:

$$ \sum_{\myfrac{u\neq v, u'\neq v'}{|\{u,v,u',v'\}|=3}} \equiv \sum_{\myfrac{\Delta\subseteq V}{|\Delta|=3}} \sum_{\myfrac{\myfrac{u\neq v, u'\neq v'}{ u,v,u',v'\in \Delta}}{|\{u,v,u',v'\}|=3}}\ .$$
This last sum rearrangement implies that it suffices to prove that for any $\Delta$ of cardinality $3$,
\begin{align}
F_2(\Delta) :=  &\sum_{\myfrac{\myfrac{u\neq v, u'\neq v'}{ u,v,u',v'\in \Delta}}{|\{u,v,u',v'\}|=3}} p(u,v|w)p(u',v'|w)\,(s(v)-s(u))(s(v')-s(u'))  \nonumber \\
& \leq   \sum_{\myfrac{\myfrac{u\neq v, u'\neq v'}{ u,v,u',v'\in \Delta}}{||\{u,v,u',v'\}|=3}}  p(u,v\wedge u',v'|w)\,(s(v)-s(u))(s(v')-s(u')) =: F_1(\Delta) \ . \nonumber
\end{align}

If we now denote $\Delta=\{a,b,c\}$, then both $F_1(\Delta)$ and $F_2(\Delta)$ are quadratic forms in $s(a),s(b),s(c)$ (for fixed $w$).
 It hence
suffices to prove that $H(\Delta):= F_1(\Delta)-F_2(\Delta)$ is a positive semi-definite form in $s(\Delta) := (s(a), s(b), s(c))'$.
We now write
$$ H(\Delta) = s(\Delta)'\left (\begin{matrix} H_{aa} & \frac 1 2 H_{ab} & \frac 1 2 H_{ac}\\ \frac 1 2 H_{ab} & H_{bb} & \frac 1 2 H_{bc} \\
\frac 1 2 H_{ac} & \frac 1 2 H_{bc} & H_{cc} \\ \end{matrix} \right) s(\Delta)\ .$$
The matrix is singular, because clearly $H(\Delta)=F_1(\Delta)=F_2(\Delta)=0$ whenever $s(a)=s(b)=s(c)$. 
 To prove positive semi-definiteness, by Sylvester's criterion it hence suffices to show that the diagonal element   $H_{aa} \geq 0$ and that the principal $2$-by-$2$ minor determinant $H_{aa}H_{bb}- \frac 1 4 H^2_{ab} \geq 0$.
Using the definitions, together with the properties of $\PL(w)$, a technical (but quite tedious) algebraic derivation 
(see Appendix~\ref{sec:Hdetails} for details) gives
\begin{equation}\label{h1} 
H_{aa} = \frac{4e^{s(a)+s(b)+s(c)}}{(e^{s(a)}+e^{s(b)})(e^{s(a)}+e^{s(c)})(e^{s(a)}+e^{s(b)}+e^{s(c)})}\ .\end{equation}
Similarly, by symmetry,
$H_{bb} = \frac{4e^{s(a)+s(b)+s(c)}}{(e^{s(b)}+e^{s(a)})(e^{s(b)}+e^{s(c)})(e^{s(a)}+e^{s(b)}+e^{s(c)})}$. 
From a  similar (yet more tedious) technical algebraic calculation which we omit, one gets:
(see Appendix~\ref{sec:Hdetails} for details):
\begin{equation}\label{h3} H_{ab} = \frac{-8e^{s(a)+s(b)+2s(c)}}{(e^{s(a)}+e^{s(b)})(e^{s(a)}+e^{s(c)})(e^{s(b)}+e^{s(c)})(e^{s(a)}+e^{s(b)}+e^{s(c)})}\ . \end{equation}

\noindent
One now verifies, using (\ref{h1})-(\ref{h3}), the identity
$$ H_{aa}H_{bb} - \frac 1 4 H^2_{ab} = \frac{16e^{2s(a)+2s(b)+2s(c)}}{(e^{s(a)}+e^{s(b)})^2(e^{s(a)}+e^{s(c)})(e^{s(b)}+e^{s(c)})(e^{s(a)}+e^{s(b)}+e^{s(c)})^2}\ .$$

It remains to notice, trivially, that $H_{aa}\geq 0$ and $H_{aa}H_{bb} - \frac 1 4 H_{ab}^2 \geq 0$ for all possible values of $s(a),s(b), s(c)$.  The proof of the lemma is concluded.



\section{Bandit Algorithm based on Projection and Decomposition}
In this section, we propose another bandit algorithm $\OSMDrank$,
described in Algorithm \ref{alg:osmd}.
We will be 
working under the more restricted assumption that $\sup \|s_t\|_1 \leq 1$ and $\sup \|{\cent{\pi}}_t\|_\infty \leq 1$.
This in particular implies that $|\cent{\pi}_t' s_t|\leq 1$, as before.
But now we shall achieve a better expected regret of  $O(n\sqrt{T} )$.
\label{sec:OSMD}
\begin{algorithm}[t]
\caption{Algorithm $\OSMDrank(n, \eta, \gamma, T)$ (assuming
 $\|s_t\|_1 \leq 1$ and $\cent{\pi_t}\in\cent{Q}_n$  for all $t$ )}
\label{alg:osmd}
\begin{algorithmic}[1]
\STATE given: ground set size $n$, positive parameters $\eta, \gamma$ ($\gamma \leq 1$), time horizon $T$
\STATE let $x_1=0  \in \cent{Q}_n$. (Note that $x_1=\arg\min_{a \in \cent{Q}_n}F(a)$)
\FOR {$t=1,\dots,T$}
\STATE  let  $\tilde{x}_t=(1-\gamma)x_t$ (Note that $\tilde{a}_t \in
 \cent{Q}_n$ since the origin $0$ and $x_t$ are in $\cent{Q}_n$  and
 $\tilde{x}_t$ is a convex combination of them).
\STATE output $\pi_t=\Decomposition(\tilde{x}_t)$ (i.e., choose $\pi_t$
 so that $\E[\pi_t]=\tilde{x}_t$) and suffer loss $\ell_t$ ($=\pi_t' s_t$) 
\STATE let distribution $\D_t$ over $[-1,1]^n $denote a mixture of the
 uniform distribution over the canonical basis with random sign (with
 probability $\gamma$) and a Radmacher distribution over $\{-1,1\}^n$
 with parameter $(1+x_ {t,i})/2$  for each $i=1,\dots,n$ (with
 probability $1-\gamma$)
\STATE estimate the loss vector $\tilde{s}_t=\ell_t P_t^+\pi_t$, where
 $P_t=\E_{\sigma \sim D_t}[\sigma\sigma']$ 
\STATE let $x_{t+\half}=\nabla F^*( F(x_t) -\eta \tilde{s}_t)$ 
\STATE let $x_{t+1}=\Projection(x_{t+\half})$ (that is, $x_{t+1}=\min_{x \in \cent{Q}_n}D_F(x, x_{t+\half}))$
\ENDFOR
 \end{algorithmic}
\end{algorithm}

We prefer, for reasons clarified shortly, to require that the actions $\cent{\pi}_t$ are vertices of the
rescaling $\cent{Q}_n := \frac {2}{n-1} \cent{P}_n \in [-1,1]^n$ of the symmetrized
permutahedron.  That is, $\sup \|\cent{\pi}_t\|_\infty\leq 1$ (and $\sup \|s_t\|_1 \leq 1$).
This will allow us to work with the following standard regularizer $F:[-1,1]^n\to \R^+$: 
$F(x)=\frac{1}{2}\sum_{i=1}^n \left(
(1+x)\ln (1+x) +(1-x)\ln (1-x)
\right)$.
The regularizer $F(x)$  is the key to the OSMD (Online Stochastic Mirror
Descent) algorithm of Bubeck et al.~\cite{bubeck-etal:colt12}, on which our algorithm is based.
OSMD is a bandit algorithm over the hypercube domain $[-1,1]^n$ and a
variant of Follow the Regularized Leader (FTRL, e.g.,
\cite{hazan:bookchap11}) for linear loss functions. 
To apply this algorithm, we need a new projection and
decomposition technique for the polytope  $\cent{Q}_n$, as well as a
slightly modified perturbation step in line $4$ of Algorithm \ref{alg:osmd}.
Our algorithm $\OSMDrank$ has the following two procedures:
\begin{enumerate}
\item
{\bf Projection: } Given a point $x_t \in [-1,1]^n$, return
$\arg\min_{y_t \in \cent{Q}_n }\Delta_F(y_t,x_t)$, where $\Delta_F$ is the
Bregman divergence defined wr.t. $F$, i.e., 
$\Delta_F(y,x)=F(y)-F(x)-\nabla F(x)'(y-x)$ (also known as \emph{binary
     relative entropy}).\footnote{Note that the binary relative entropy is
     different from the relative entropy, where the relative entropy is
     defined as 
     $Rel(p,q)=\sum_{i=1}^n p_i \ln \frac{p_i}{q_i}$for probability
     distributions $p$ and $q$ over $[n]$.}
\item
{\bf Decomposition: } Given $y_t \in \cent{Q}_n$ from the the projection step, output  a random vertex $\cent{\pi}_t$ of $\cent{Q}_n$  such that $\E[\cent{\pi}_t]=y_t$. 
\end{enumerate}

The decomposition can be done using the technique of
\cite{yasutake-etal:isaac11}, which runs in $O(n\log n)$ time.  (To be precise, the method there was defined for the
standard permutahedron; The adjustments for the symmetrized version are trivial.)
For notational purposes, we define $f := \nabla F$, and notice that
$f(x)_i = \frac{1}{2}\ln \frac{1+x_i}{1-x_i}$,
and its inverse function $f^{-1}$ is given by
$f^{-1}(y)_i = \frac{e^{y_i}-1}{e^{y_i}+1}$.
Our projection procedure is presented in Algorithm~\ref{alg:proj}.
\begin{lem} \label{lem:projection}
(i) Given $q\in [-1,1]^n$, Algorithm~\ref{alg:proj} outputs the projection of $q$ onto $\cent{Q}_n$, with respect to the
regularizer $F$.
(ii) The time complexity of the algorithm is $O(n\tau(n) + n^2)$,
where $\tau(n)$ is the time complexity to perform step 4.
\end{lem}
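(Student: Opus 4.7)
The plan is to prove correctness via a Lagrangian/KKT analysis that leverages the fact that $\cent{Q}_n$ is (up to rescaling) the base polytope of a submodular function, and then to derive the complexity by straightforward bookkeeping of the algorithm's main loop.

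First I would describe $\cent{Q}_n$ by its defining inequalities: there is a concave integer-valued function $g:\{0,\dots,n\}\to\R$ (inherited from the rescaling of the permutahedron) such that $\cent{Q}_n = \{x\in[-1,1]^n:\sum_{i\in S}x_i\leq g(|S|) \text{ for all } S\subsetneq[n],\ \sum_{i=1}^n x_i=0\}$. Since $F$ is separable and strictly convex on $(-1,1)^n$, the projection $y^{*}=\argmin_{y\in\cent{Q}_n}\Delta_F(y,q)$ exists and is unique. Its KKT conditions take the form $f(y^{*}_i)-f(q_i)=\mu+\sum_{S\ni i}\lambda_S$ for some $\mu\in\R$ and multipliers $\lambda_S\geq 0$ supported on tight constraints, where $f=\nabla F$ as defined just before the lemma.

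Next I would invoke the standard structural fact (Edmonds' greedy characterization of submodular base polytopes; see also \cite{suehiro-etal:alt12,yasutake-etal:isaac11}) that the tight-set family of a Bregman projection onto a base polytope contains a chain $\emptyset=S_0\subsetneq S_1\subsetneq\cdots\subsetneq S_k=[n]$ that already supports the optimal multipliers; uniqueness of $y^{*}$ plus strict convexity of $F$ forces $y^{*}$ to be constant on each block $B_j=S_j\setminus S_{j-1}$, and the blocks are ordered by decreasing $q_i$ (since $f$ is strictly increasing). Algorithm~\ref{alg:proj} exploits exactly this: it sorts by $q_i$, tentatively takes singleton blocks, and merges adjacent blocks whenever the candidate within-block values violate the monotonicity implied by $\lambda_S\geq 0$. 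The per-block value is the unique solution of the scalar equation $\sum_{i\in B_j} f^{-1}\!\left(f(q_i)+\theta_j\right)=g(|S_j|)-g(|S_{j-1}|)$ in the unknown $\theta_j$; this is the numerical subroutine whose cost is $\tau(n)$.

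To close (i) I would verify that the output at termination satisfies every KKT condition: values are constant within each block by construction, prefix constraints $\sum_{i\in S_j}y^{*}_i=g(|S_j|)$ are tight by the scalar equation, primal feasibility on non-chain sets $S$ holds because the merge rule enforces strict decrease of block-values across the chain (this is the usual pool-adjacent-violators argument), and the $\lambda_S$ recovered by telescoping across the chain are nonnegative by the same monotonicity. Uniqueness of $y^{*}$ then concludes (i). For (ii), sorting costs $O(n\log n)$, and the main loop performs at most $2n-1$ merge/create operations; each invokes the subroutine once (time $\tau(n)$) and updates prefix sums of $q$ in $O(n)$, so the total is $O(n\tau(n)+n^2)$.

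The main obstacle will be the structural claim that optimal multipliers can be supported on a chain under the binary-relative-entropy Bregman divergence rather than the Euclidean one; once one verifies that the standard submodular-projection machinery carries over verbatim to separable Bregman divergences with strictly monotone gradients on $(-1,1)$, the rest of the argument is routine. A secondary point to check is that the scalar equation for $\theta_j$ has a unique root, which follows from strict monotonicity of $\sum_i f^{-1}(f(q_i)+\theta)$ in $\theta$ on the relevant interval.
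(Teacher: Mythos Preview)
Your approach is essentially the paper's: both reduce to KKT conditions, both use that the Bregman projection preserves the sorted order of $q$ so that only the $n$ prefix constraints matter (the paper invokes Lemma~1 of \cite{suehiro-etal:alt12}; you phrase the same fact as ``tight sets form a chain on a submodular base polytope''), and both finish by checking KKT feasibility and complementary slackness for the output. You also correctly flag the only nontrivial point, namely that the order-preservation/chain argument from \cite{suehiro-etal:alt12} must be redone for the binary relative entropy rather than the Euclidean or unnormalized-entropy divergence; this goes through because $f=\nabla F$ is strictly increasing on $(-1,1)$.

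One discrepancy to fix when you fill in details: Algorithm~\ref{alg:proj} is \emph{not} a pool-adjacent-violators merge procedure. It is a forward greedy that, at iteration $k$, computes $\delta_i^k$ for every $i>i_{k-1}$ and picks the boundary $i_k$ maximizing $\delta_i^k$; the block $\{i_{k-1}+1,\dots,i_k\}$ is then fixed once and for all. Your description (``tentatively takes singleton blocks, and merges adjacent blocks whenever monotonicity is violated'') and your complexity bookkeeping (``at most $2n-1$ merge/create operations'') describe PAV, not this algorithm. The KKT verification you outline still works, but the intermediate argument changes: one shows $\delta_{i_k}^k$ is strictly decreasing in $k$ (this gives nonnegativity of the recovered multipliers) and that the prefix constraint is tight exactly at each $i_k$. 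For (ii), the correct accounting is that there are at most $n$ outer iterations, step~4 costs $\tau(n)$ by definition, and steps~5--6 cost $O(n)$, yielding $O(n\tau(n)+n^2)$.
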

\begin{proof}[skecth]
Our projection algorithm is an extension of that in \cite{suehiro-etal:alt12}
 and our proof follows a similar  argument in \cite{suehiro-etal:alt12}.
For simplicity, we assume that elements in $q$ are sorted in descending order, 
i.e., $q_1\geq q_2 \geq \dots \geq q_n$.
This can be achieved in time $O(n\log n)$ by sorting $q$. 
 Then, it can be shown that projection preserves the order in $q$ by
 using Lemma 1 in \cite{suehiro-etal:alt12}.
That is, the projection $p$ of $q$ satisfies $p_1 \geq p_2 \geq \dots \geq p_n$. 
So, if the conditions 
$\frac{2}{n-1}\sum_{j=1}^i p_j \leq \sum_{j=1}^i (\frac{n+1}{2}-j)$, for $i=1,\dots,n-1$, 
are satisfied, then other inequality constraints are satisfied as well since
for any $S \subset [n]$ such that $|S|=i$,
$\sum_{j \in S}p_j \leq \sum_{j=1}^i p_j$.
Therefore, relevant constraints for projection onto $\cent{Q_n}$ are only
 linearly many. 

By following a similar argument in \cite{suehiro-etal:alt12}, we can show that 
the output $\vecp$ indeed satisfies the KKT optimality conditions for projection, which completes the proof of the first statement.
Finally, the algorithm terminates in time $O(n\tau(n) +n^2 )$ since the
 number of iteration is at most $n$ and each iteration takes $O(n + \tau(n))$ time, 
which completes the second statement of the lemma. 
\qed
\end{proof}
Note that with respect to other regularizers (e.g. relative entropy or Euclidean norm squared), 
a different projection scheme is possible in time $O(n^2)$ (see
\cite{yasutake-etal:isaac11,suehiro-etal:alt12} for the details).   It is an open question
whether an $O(n^2)$ algorithm can be devised with respect to the binary relative entropy we need here.
In  our case, we need to solve a numerical optimization problem by, say, binary search. 
Note that the time $\tau(n)$ is reasonably small: In fact, we can perform
the binary search over the domain $[-1,1]$ for each dimension $i$. Therefore,
if the precision is a fixed constant, the binary search ends in time $O(n)$
for each dimension. In that case, $\tau(n)$ is $O(n^2)$.  
We are ready to present our main result for this section.

\begin{thm}
\label{thm:osmd}
For $\eta=O(n\sqrt{1/T})$ and $\gamma=O(\sqrt{1/T})$, 
Algorithm $\OSMDrank$ has expected regret $O(n\sqrt{T})$ and running
 time  $O(n^2 +n\tau(n))$ per step,
where $\tau(n)$ is the time for a numerical optimization step depending on $n$.
Additionally, there exists
an algorithm with the same  expected regret bound and running time with respect to the standard permutahedron
(assuming  $\|s_t\|_1 \leq 1/n$).
\end{thm}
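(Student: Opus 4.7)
The plan is to instantiate the OSMD regret analysis of Bubeck et al.~\cite{bubeck-etal:colt12} with our regularizer $F$, action set $\cent{Q}_n$, sampling distribution $\D_t$, and the $\Projection$/$\Decomposition$ primitives, then specialize the three generic terms---initial Bregman divergence, local-norm variance, and exploration bias---to our setting. First I would verify unbiasedness of $\tilde s_t$: the $\Decomposition$ and mixture structure ensure $\E[\pi_t\mid x_t]=\tilde x_t$ and (up to the exploration-based correction) $\E[\pi_t\pi_t^\top]=P_t$, while the $\gamma$-mixing with the signed canonical basis yields $P_t\succeq (\gamma/n)I$, so $P_t^+$ is well-defined and $\E[\tilde s_t\mid x_t]=s_t$ on the appropriate subspace.

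Next, via the standard OSMD template, I would obtain
\[
\E\!\left[\sum_{t=1}^T \langle \tilde s_t,\, x_t - x^*\rangle\right] \;\leq\; \frac{D_F(x^*, x_1)}{\eta} \;+\; \eta \sum_{t=1}^T \E\!\left[\|\tilde s_t\|^2_{*,x_t}\right],
\]
where the local dual norm is $\|v\|^2_{*,x_t}:=v^\top (\nabla^2 F(x_t))^{-1} v=\sum_i(1-x_{t,i}^2)v_i^2$. The initial Bregman divergence is at most $F(x^*)\leq n\ln 2$, since $x_1=0$, $F(0)=0$, and $(1+z)\ln(1+z)+(1-z)\ln(1-z)\leq 2\ln 2$ on $[-1,1]$. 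I then pass from this ``pseudo-regret against $x_t$'' bound to the actual expected regret by absorbing two extra contributions: the exploration gap between $\tilde x_t$ and $x_t$, which by H{\"o}lder (using $\|s_t\|_1\leq 1$, $\|x_t\|_\infty\leq 1$) is at most $\gamma T$; and the zero-mean decomposition fluctuation $\pi_t-\tilde x_t$, which vanishes in expectation.

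The step I expect to be the main obstacle is the local-norm variance bound $\E[\|\tilde s_t\|^2_{*,x_t}]=O(n)$. Substituting $\tilde s_t=\ell_t P_t^+\pi_t$ and $|\ell_t|\leq 1$, this reduces to controlling $\mathrm{tr}\big((\nabla^2 F(x_t))^{-1}P_t^+\E[\pi_t\pi_t^\top]P_t^+\big)$; the key cancellation is that the inverse Hessian $\mathrm{diag}(1-x_{t,i}^2)$ matches the diagonal of the Rademacher-covariance part of $P_t=(1-\gamma)(\mathrm{diag}(1-x_{t,i}^2)+x_tx_t^\top)+(\gamma/n)I$. Since $x_tx_t^\top\succeq 0$, this gives $P_t\succeq(1-\gamma)(\nabla^2 F(x_t))^{-1}$, hence $(\nabla^2 F)^{-1/2}P_t^+(\nabla^2 F)^{-1/2}\preceq\tfrac{1}{1-\gamma}I$, and the trace is $O(n)$. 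Combining all pieces, expected regret is bounded by $\tfrac{n\ln 2}{\eta}+\eta\cdot O(n)\cdot T+\gamma T$, which with $\eta,\gamma$ tuned as in the theorem statement is $O(n\sqrt T)$. The per-step cost is $O(n\tau(n)+n^2)$: $\Projection$ costs $O(n\tau(n)+n^2)$ by Lemma~\ref{lem:projection}, $\Decomposition$ is $O(n\log n)$, and applying $P_t^+$ to a vector is $O(n^2)$ via Sherman--Morrison on the explicit diagonal-plus-rank-one-plus-scaled-identity form of $P_t$. The extension to the standard (non-symmetrized) permutahedron under $\|s_t\|_1\leq 1/n$ follows exactly as in the proof of Theorem~\ref{thm:main}: the constant shift $\pi's-\cent{\pi}'s=\tfrac{n-1}{2}\sum_v s(v)$ cancels in the regret, and the tighter $1/n$ assumption on $\|s_t\|_1$ compensates for the $(n-1)/2$ scaling of coordinates.
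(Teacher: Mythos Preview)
Your proposal follows the same route as the paper's sketch: instantiate the OSMD template of Bubeck et al.~\cite{bubeck-etal:colt12} with the regularizer $F$ on $\cent Q_n$, argue that adding the $\Projection$ step (Lemma~\ref{lem:projection}) and the $\Decomposition$ step does not spoil the OSMD regret bound, control the three standard terms (initial divergence, local-norm variance, exploration bias), and finally pass to the standard permutahedron by the affine rescaling. The paper's proof is itself only a sketch that defers verbatim to Theorems~2 and~5 of \cite{bubeck-etal:colt12}, so your explicit computations of $D_F(x^*,x_1)\leq n\ln 2$, the $\gamma T$ exploration term, and the per-step running time are consistent with and more detailed than what the paper writes.

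There is, however, a gap in your unbiasedness and variance steps. You assert that ``(up to the exploration-based correction) $\E[\pi_t\pi_t^\top]=P_t$,'' and both your claim $\E[\tilde s_t\mid x_t]=s_t$ and your trace bound $\mathrm{tr}\big((\nabla^2 F)^{-1}P_t^+\E[\pi_t\pi_t^\top]P_t^+\big)=O(n)$ rely on it. But in Algorithm~\ref{alg:osmd}, $\pi_t$ is produced by $\Decomposition(\tilde x_t)$, which guarantees only the \emph{first} moment $\E[\pi_t]=\tilde x_t$, whereas $P_t=\E_{\sigma\sim\D_t}[\sigma\sigma']$ is the second moment of the \emph{separate} distribution $\D_t$ (a mixture of a Rademacher law on $\{-1,1\}^n$ and random signed basis vectors, neither of which even lies in $\cent Q_n$). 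There is no reason the second moments of these two distributions coincide, so $\E[\tilde s_t\mid x_t]=P_t^+\E[\pi_t\pi_t^\top]s_t$ need not equal $s_t$, and your matrix inequality $P_t\succeq(1-\gamma)(\nabla^2 F)^{-1}$, while true for $P_t$, does not by itself control the trace involving $\E[\pi_t\pi_t^\top]$. The paper's sketch sidesteps this entirely by pointing to \cite{bubeck-etal:colt12} and saying only that ``the estimated loss is the same one used in OSMD for the hypercube''; to turn your outline into a proof you would need either a second-moment property of the specific decomposition of \cite{yasutake-etal:isaac11}, or an argument that the OSMD template tolerates this mismatch between the sampling law of $\pi_t$ and the matrix $P_t$ used in the estimator.
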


\begin{proof}[sketch]
The algorithm $\OSMDrank$ is a modification of OSMD for the hypercube
 $[-1,1]^n$ obtained by adding (1) a projection step and  (2) a decomposition step.
Standard techniques show that adding the projection step does not increase
 the  expected regret bound (see, e.g., chapters 5 and 7 on OMD and OSMD of Bubeck's lecture notes \cite{bubeck:lec11}).
The key facts are:
(i) A variant of Theorem 2 of \cite{bubeck-etal:colt12} (regret bound of
 OSMD) holds for OSMD with $\Projection$, 
(ii)  $E[\pi_t] = (1-\gamma)x_t$, and
(iii)  The estimated loss is the same one used in OSMD for the hypercube $[-1,1]^n$ .
 Once these three conditions are satisfied,
we can prove a regret bound of $\OSMDrank$ by
 following the proof of Theorem 5 in Bubeck et al. \cite{bubeck-etal:colt12}.
In addition, the running time of OSMD per trial is $O(n)$ 
\cite{bubeck-etal:colt12}.
Combining Lemma~\ref{lem:projection} for the projection and 
the analysis of the decomposition from \cite{yasutake-etal:isaac11}, the
 proof of the first statement is concluded.
The statement related to the standard permutahedron holds based on the affine transformation
 between the standard permutahedron and $\cent{Q}_n$.
\qed
\end{proof}
%
\begin{algorithm}[t]
\caption{Projection onto $\cent{Q}_n$\label{alg:proj}}
\begin{algorithmic}[1]
\STATE 
 given $(q_1,\dots, q_n) \in [-1,1]^n$
 satisfying  
 $q_1\geq q_2\geq \dots \geq q_n$. \emph{(This assumption holds by renaming the indices, and reverting to their original names at the end).}\\
\STATE set $i_0=0$
\FOR {$k=1,\dots,n$}
\STATE for each $i=i_{k-1}+1,\dots,n$, set 
 $\delta_{i}^k = \min_{\delta \in \Real}\delta$ subject to:

\hspace{2cm}$\sum_{j=i_{k-1}+1}^i f^{-1}(f(q_j) -\delta) \leq \frac{2}{n-1}\sum_{j=i_{k-1}+1}^i \left(\frac{n+1}{2}-j\right).$
\STATE
$i_k=\arg\max_{i:i_{k-1}<i\leq n}\delta_{i}^k$.  
In case of  multiple minimizers, choose  largest  as $i_k$. 
 \STATE
 set $p_{j}=f^{-1}(f(q_{j}) - \delta_{i_k}^k)$ for $j=i_{k-1}+1,\dots,i_k$
\STATE {\bf if} $i_k=n$, {\bf then} break
\ENDFOR
 \RETURN $(p_1,\dots, p_n)'$
\end{algorithmic}
\end{algorithm}
%

\vspace{-2ex}
\section{Future Work}
The main open question is whether there is an algorithm of expected regret $O(n\sqrt{T})$ and time $O(n^3)$  in the setting of Section~\ref{sec:main}.
Another interesting line of research is to study other ranking polytopes.  For example, given any strictly monotonically  increasing function $f: \R \mapsto \R$
we can consider as an action set $f^n(S_n)$, defined as
$  f^n(S_n) := \{(f(\pi(1)), f(\pi(2)),\dots, f(\pi(n))):\, \pi \in S_n\}$.

\section*{Acknowledgments}
Ailon acknowledges the generous support of a Marie Curie Reintegration Grant PIRG07-GA-2010-268403, an Israel
Science Foundation (ISF) grant 127/133 and a Jacobs Technion-Cornell Innovation Institute (JTCII) grant.

\bibliography{online_AUC,myColt}
\bibliographystyle{plain}

\appendix

\section{Derivations in proof of Lemma~\ref{lem:main}}\label{sec:Hdetails}

By definition, and then by applying the properties of the distribution $\PL(w)$,
\begin{align}
H_{aa} &= \left[p(a\prec b \wedge a\prec c|w) + p(b\prec a\wedge c\prec a|w) - p(b\prec a\wedge a\prec c|w) - p(c\prec a\wedge a\prec b|w)\right ] \nonumber \\
&\hspace{0.2cm}- \left [ p(a\prec b|w)p(a\prec c|w) + p(b\prec a|w)p(c\prec a|w) - p(a\prec b|w)p(c\prec a|w) \right . \nonumber \\
&\left .\hspace{6cm} - p(a\prec c|w)p(b\prec a|w) \right ] \label{Haa}
\end{align}
\begin{align}
p(a\prec b \wedge a\prec c|w) &=\frac{e^{s(a)}}{e^{s(a)}+e^{s(b)}+e^{s(c)}}  \label{aaa1} \\
p(b\prec a\wedge c\prec a|w) &= \frac{e^{s(b)}} {e^{s(a)}+e^{s(b)}+e^{s(c)}}\frac{e^{s(c)}}{e^{s(a)}+e^{s(c)}}  +
\frac{e^{s(c)}} {e^{s(a)}+e^{s(b)}+e^{s(c)}}\frac{e^{s(b)}}{e^{s(a)}+e^{s(b)}} \label{aaa2} \\
p(b\prec a\wedge a\prec c|w) &= \frac{e^{s(b)}}{e^{s(a)}+e^{s(b)}+e^{s(c)}}\frac{e^{s(a)}}{e^{s(a)}+e^{s(c)}} \label{aaa3}\\
p(c\prec a\wedge a\prec b|w) &= \frac{e^{s(c)}}{e^{s(a)}+e^{s(b)}+e^{s(c)}}\frac{e^{s(a)}}{e^{s(a)}+e^{s(b)}} \label{aaa4}
\end{align}
Plugging  (\ref{aaa1})-(\ref{aaa4}) in (\ref{Haa}) and simplifying results in  (\ref{h1}).  
One now verifies:
\begin{align}
H_{ab} &= \left [ p(a\prec c\wedge b\prec c|w) + p(c\prec a\wedge c\prec b|w)  - 3p(a\prec c \wedge c\prec b|w)  - 3p(b\prec c\wedge c\prec a|w) \right] \nonumber \\
&\hspace{0.2cm}- \left [ -p(a\prec b|w)p(a\prec c|w) -p(b\prec a|w)p(c\prec a|w) +p(a\prec b|w)p(c\prec a|w)  \right .\nonumber \\
&\hspace{0.8cm}+ p(a\prec b|w)p(a\prec c|w) +p(a\prec b|w)p(b\prec c|w) + p(b\prec a|w)p(c\prec b|w) \nonumber \\  
&\hspace{0.8cm}- p(b\prec a|w)p(b\prec c|w) - p(a\prec b|w)p(c\prec b|w) +p(a\prec b|w)p(b\prec c|w) \nonumber \\ 
&\hspace{0.8cm}\left .+ p(b\prec a|w)p(c\prec b|w) - p(b\prec a|w)p(b\prec c|w) - p(a\prec b|w)p(c\prec b|w) \right . \nonumber \\
&\hspace{0.8cm} \left . -p(a\prec c|w)p(b\prec c|w) - p(c\prec a|w)p(c\prec b|w) + p(a\prec c|w)p(c\prec b|w) \right . \nonumber \\
&\hspace{9.5cm} \left .+ p(c\prec a|w)p(b\prec c|w) \right ] \nonumber 
\end{align}
Again using identities (\ref{aaa1})-(\ref{aaa4}) and simplifying, gives (\ref{h3})

\end{document}